\newif\ifarxiv 
\arxivtrue

\newif\ifreview 
\reviewfalse

\newif\iffinal 
\finaltrue

\ifarxiv
\documentclass{article}
\else
\documentclass[envcountsame]{llncs}
\fi

\usepackage[utf8]{inputenc}
\usepackage[utf8]{inputenc}

\ifarxiv
\usepackage[a4paper,left=2cm,right=2cm,top=2cm,bottom=2cm]{geometry}
\usepackage[onehalfspacing]{setspace}
\usepackage{amsthm}
\usepackage{authblk}
\usepackage[hidelinks]{hyperref}
\fi

\usepackage[numbers]{natbib}
\usepackage{amsmath,amssymb,mathtools,dsfont}
\usepackage{esvect}

\usepackage{booktabs} 
\usepackage{xspace}
\usepackage{graphicx}
\usepackage{xspace}
\usepackage{url}
\usepackage{enumitem}
\setlist[enumerate,1]{label=\textup{(\roman*)}}

\usepackage{caption} 
\usepackage{subcaption}

\usepackage{booktabs}
\usepackage{svg}
\usepackage{hyperref}

\usepackage[algo2e,ruled,vlined,linesnumbered]{algorithm2e} 
\SetAlgoSkip{}
\DontPrintSemicolon

\ifarxiv 
\usepackage[appendix=inline]{apxproof}  
\RenewEnviron{proofsketch}{}
\RenewEnviron{toappendix}{}
\else
\ifreview
\usepackage[appendix=append]{apxproof}  
\else
\usepackage[appendix=strip]{apxproof}   
\fi
\usepackage{xparse} 
\RenewDocumentEnvironment{proofsketch}{o} 
{%
  \par\noindent\textit{Proof sketch%
  \IfValueT{#1}{ (#1)}.}\quad
}
{\par}
\fi

\allowdisplaybreaks[3] 

\usepackage{tikz}
\usetikzlibrary{shapes,arrows,angles}
\usetikzlibrary{decorations}
\usetikzlibrary{decorations.markings}
\usetikzlibrary{plotmarks}
\usetikzlibrary{calc}
\usetikzlibrary{mindmap}
\usetikzlibrary{shadows}
\usetikzlibrary{backgrounds}
\usetikzlibrary{patterns}
\usetikzlibrary{shapes.symbols}

\usepackage{pgfplots}
\usepackage{pgfplotstable}
\usepgfplotslibrary{statistics}
\pgfplotsset{compat=1.18}

\ifarxiv
\newtheorem{theorem}             {Theorem}
\newtheorem{lemma}      [theorem]{Lemma}

\newtheorem{definition} [theorem]{Definition}

\fi

\newtheoremrep{theorem}{Theorem}
\newtheoremrep{lemma}     [theorem]{Lemma}
\newtheoremrep{definition}[theorem]{Definition}
\newtheoremrep{conjecture}[theorem]{Conjecture}
\newtheoremrep{corollary} [theorem]{Corollary}
\newtheoremrep{proposition}[theorem]{Proposition}

\ifarxiv\else
\AtEndEnvironment{proof}{\qed}
\AtEndEnvironment{proofsketch}{\qed}
\fi

\usepackage{tabularx,colortbl}

\makeatletter
\g@addto@macro\bfseries{\boldmath}
\makeatother

\newcommand{\ie}{i.\,e.\xspace}

\newcommand{\eg}{e.\,g.\xspace}

\newcommand{\N}{\mathbb{N}}
\newcommand{\R}{\mathbb{R}}

\DeclareMathOperator{\Unif}{Unif}                         

\newcommand*{\bigO}{\mathcal{O}}

\newcommand{\prob}[1]{\Pr\left(#1\right)}                 
\newcommand{\Prob}[1]{\Pr\left(#1\right)}                 

\newcommand{\ones}[1]{|#1|_1}                             
\newcommand{\zeroes}[1]{|#1|_0}                           

\newcommand{\TRAP}{\textsc{Trap}\xspace}                           

\newcommand{\OMM}{\textsc{OMM}\xspace}                             
\newcommand{\OMMfull}{\textsc{OneMinMax}\xspace}                   

\newcommand{\LOTZ}{\textsc{LOTZ}\xspace}                           
\newcommand{\LOTZfull}{\textsc{LeadingOnesTrailingZeroes}\xspace}  

\newcommand{\OJZJ}{\textsc{OJZJ}\xspace}                           
\newcommand{\OJZJfull}{\textsc{OneJumpZeroJump}\xspace}            

\newcommand{\OTZT}{\textsc{OTZT}\xspace}                           
\newcommand{\OTZTfull}{\textsc{OneTrapZeroTrap}\xspace}            

\newcommand{\mLOTZfull}{\ensuremath{m}\text{-}\textsc{LeadingOnesTrailingZeros}\xspace}

\newcommand{\mOMMfull}{\ensuremath{m}\text{-}\textsc{OneMinMax}\xspace}

\newcommand{\mOJZJfull}{\ensuremath{m}\text{-}\textsc{OneJumpZeroJump}\xspace}

\newcommand{\cwipeout}{c_{\mathrm{wipeout}}}

\newcommand{\nsga}{NSGA\nobreakdash-II\xspace}
\newcommand{\nsgaiii}{NSGA\nobreakdash-III\xspace}

\newcommand{\algnsga}{NSGA\nobreakdash-II\xspace}

\newcommand{\algnsgaiii}{NSGA\nobreakdash-III\xspace}
\newcommand{\algsms}{SMS-EMOA\xspace}
\newcommand{\algspeaii}{SPEA2\xspace}
\newcommand{\algspeaiisim}{SPEA2$^{+}$\xspace}

\newcommand{\inearestdist}[3]{\ensuremath{\sigma^{#1/#2}_{#3}}\xspace}        

\newcommand{\nearestdistleq}[1]{\operatorname{\leq}_{d\left(#1\right)}}       
\newcommand{\nearestdistlt}[1]{\operatorname{<}_{d\left(#1\right)}}           

\newcommand{\nearestdistleqzero}[1]{\operatorname{\leq}_{d_0\left(#1\right)}} 

\ifarxiv\else 
\fi

\newcommand{\toggleplot}[1]{{\textcolor{red}{Plots removed to increase compilation speed. Use command $\backslash$toggleplot in preamble to reinsert them.}}}

\iffinal
\newcommand{\andre}[1]{}
\newcommand{\dirk}[1]{}
\newcommand{\cuong}[1]{}

\newcommand*{\todo}[1]{}
\else
\newcommand{\andre}[1]{\textcolor{green!50!black}{[(Andre) #1]}}
\newcommand{\dirk}[1]{\textcolor{purple}{[(Dirk) #1]}}
\newcommand{\cuong}[1]{\textcolor{orange}{[(Cuong) #1]}}

\newcommand*{\todo}[1]{\textcolor{red}{\textrm{(TODO: #1)}}}
\fi

\ifreview
\usepackage[switch,mathlines]{lineno}

\usepackage{etoolbox} 
\newcommand*\linenomathpatch[1]{
  \cspreto{#1}{\linenomath}%
  \cspreto{#1*}{\linenomath}%
  \csappto{end#1}{\endlinenomath}%
  \csappto{end#1*}{\endlinenomath}%
}
\linenomathpatch{equation}
\linenomathpatch{gather}
\linenomathpatch{multline}
\linenomathpatch{align}
\linenomathpatch{alignat}
\linenomathpatch{flalign}
\fi

\begin{document}

\ifarxiv
\author[]{Duc-Cuong~Dang}
\author[]{Andre~Opris}
\author[]{Dirk~Sudholt}

\affil[]{University of Passau, Passau, Germany}
\date{}
\else
\ifreview
\author{Anonymous authors}
\institute{Anonymous institution}
\else
\author{
	Duc-Cuong~Dang\inst{1}\orcidID{0000-0002-6660-6625} \and
    Andre~Opris\inst{1}\orcidID{0000-0002-7730-7831} \and
    Dirk~Sudholt\inst{1}\orcidID{0000-0001-6020-1646}   
}
\institute{
    Chair of Algorithms for Intelligent Systems,
    University of Passau, Passau, Germany
}
\fi\fi

\title{SPEA2$^+$: Improved Density Estimation in SPEA2 with Provable Runtime~Guarantees}

\maketitle

\begin{abstract}
%
The Strength Pareto Evolutionary Algorithm 2 (SPEA2) is a popular
and prominent evolutionary algorithm for solving multi-objective optimisation 
problems. Despite its popularity, theoretical 
analyses of SPEA2 have only appeared recently. 
Moreover, 
these analyses focus exclusively on how SPEA2 handles non-dominated solutions and 
disregard the algorithmic components responsible for handling dominated solutions.
We conduct a first runtime analysis of SPEA2 for which these components are analysed. 
We prove that, unlike other prominent algorithms, including NSGA-II, NSGA-III
and \algsms under the same setting of constant population size and duplicate elimination, SPEA2 is unable to cover the Pareto
front of the \OTZTfull benchmark efficiently. 
Our results indicate that using $k$-th nearest-neighbour distance in the fitness assignment provides an insufficient signal to maintain diversity among dominated individuals.
To address this issue, we propose an improved 
variant,
\algspeaiisim, that considers all pairwise distances.
The new algorithm achieves the same performance guarantees as the other prominent algorithms 
on \OTZTfull, while matching the performance of the original SPEA2 on simpler problems. 
Experimental results complement our theoretical findings.
\end{abstract}

\ifreview
\linenumbers
\fi

\ifarxiv\else
\keywords{multi-objective optimisation, strength Pareto evolutionary algorithm, density estimation, truncation selection, runtime analysis, theory}
\fi

\section{Introduction}\label{sec:intro}
In everyday life, one often has to explore multiple alternatives and compromises 
before making a decision. A problem of this kind can often be formulated
as a Multi-Objective Optimisation (MOO) Problem. MOO is the area where
Evolutionary Algorithms shine and find many applications. Over the years, many
Multi-Objective Evolutionary Algorithms (MOEAs), such as \nsga~\cite{Deb2002},
\nsgaiii~\cite{Deb2014}, \algsms~\cite{Beume2007}, and \algspeaii~\cite{SPEA2original} 
have emerged as popular and prominent approaches for solving MOO problems.

The versatility of the above-mentioned MOEAs is often attributed to their ability 
to maintain a population of not only high-quality but also diverse solutions. 
This is achieved through a clever combination of the use of dominance 
relations and a density estimation, as discussed~\cite{SPEA2original} and 
rigorously explored in~\cite{Lessons}. For example, the famous algorithm~\algnsga~\cite{Deb2002} 
combines the non-dominated sorting algorithm with the density estimation 
via crowding distances.

Recently, theoretical research on these prominent algorithms, particularly using
runtime analysis, has become a highly active research area, beginning with 
the seminal work of~\cite{ZhengLuiDoerrAAAI22} on \algnsga. Published papers
in this area have tremendously contributed to our understanding of the practical 
success of these algorithms, of their components and their theoretical properties. 
Still, there is much to learn, and in this paper we are interested in analysing 
\algspeaii (Strength-Pareto Evolutionary Algorithm~2)~\cite{SPEA2original}. 
There have been only a few papers~\cite{SPEA22024,DoerrKS2026} 
analysing~\algspeaii. Moreover, the fitness assignment in \algspeaii, which comprises the characteristic Pareto strength calculation and $k$-nearest neighbour density estimation, 
has never been used 
in these analyses. 

\smallskip%
\noindent\textbf{Our contributions:}\label{sec:contrib}
We analyse \algspeaii with a focus on its handling of dominated solutions via fitness assignment based on Pareto strength and $k$-th nearest-neighbour density estimation. We show that, for any constant population size, \algspeaii requires exponential expected time to cover the Pareto front of \OTZTfull. This contrasts with the efficiency of \algnsga, \algnsgaiii, and \algsms, analysed in~\cite{Dang2024Illustrating,DangOS2025}, which rely on non-dominated sorting and alternative diversity mechanisms. \algspeaii is inefficient as the $k$-th nearest-neighbour distance fails to provide a sufficient signal for maintaining diversity within the dominated regions of the objective space.

To address this issue, we propose a refined variant, \algspeaiisim, which replaces the $k$-th nearest-neighbour density estimate with a measure based on all pairwise distances, analogous to the truncation operator used for non-dominated solutions in \algspeaii. Pareto strength remains the primary criterion for dominated solutions. We prove that \algspeaiisim optimises \OTZTfull in polynomial expected time with a constant population size, matching the performance guarantees of the aforementioned algorithms.


To the best of our knowledge, this is the first runtime analysis that explicitly accounts for the fitness assignment mechanism in \algspeaii, which includes the characteristic Pareto strength calculation. Previous analyses either assume sufficiently large populations such that only non-dominated solutions drive progress~\cite{SPEA22024,DoerrKS2026}, or assume that all solutions are non-dominated~\cite{ApproximationSPEA2}, thereby effectively ignoring this component. Consequently, all previously established results for \algspeaii carry over to \algspeaiisim, as our modification only affects the handling of dominated solutions.


From a methodological perspective, we provide two additional insights. First, we show that in the original SPEA2, a dominated solution may be eliminated when $k$ offspring are generated in its vicinity, forming a dense cluster with small $k$-th nearest-neighbour distances and the whole cluster is subsequently removed due to overpopulation (cf.\ Lemma~\ref{lem:elimination}). Second, we establish a geometric property of Euclidean distances in bi-objective spaces: on every bi-objective problem and every population of distinct fitness vectors, extreme points are preferred over their nearest neighbours (cf.\ Lemma~\ref{lem:nearestdistlt-property}). This observation may be of independent interest for analysing \algspeaii and \algspeaiisim.


Finally, experiments complement our theoretical findings. \algspeaiisim, along with other prominent algorithms \algnsga, \algnsgaiii, and \algsms, outperform \algspeaii on \OTZTfull as predicted. For $n=80$ and $\mu=\lambda\leq 25$, the success rate of \algspeaii in covering the Pareto front is less than $10\%$ for a computational budget of $n^3$ while all other algorithms achieve a rate of $100\%$.
%
%
\ifreview\else 
A minor technical contribution is that we found and reported a bug in the current implementation of the truncation operator of SPEA2 in the popular library PyMOO~\cite{Deb2020}. 
\fi
%
%
\ifarxiv\else
Due to space restrictions, some proofs are omitted or only sketched. 
\ifreview
\smallskip\\
\noindent\textbf{Note for reviewers:} full proofs can be found in an appendix to be read at your discretion. The appendix will not be part of the published paper. We will make a full version accessible on a preprint server, should the paper be accepted.
\else
The full version of the paper with all proofs is available on arXiv\footnote{~\todo{Add URL}}.
\fi
\fi

\smallskip%
\noindent\textbf{Related work:}\label{sec:related-work}
The theoretical study of MOEAs began around 20 years ago, initially focusing on relatively simple algorithms such as (G)SEMO~\cite{LaumannsTZWD02,Laumanns2004,Giel2003,Thierens03,DoerrKV13,Doerr2021}. This algorithm creates the next population solely based on dominance relations. The most widely studied MOEA (with around 60,000 citations) is \algnsga, which uses dominance as the primary criterion for survival selection and the crowding distance as a secondary one. This algorithm performs very well on bi-objective problems, as demonstrated by numerous empirical and theoretical results. The first runtime analysis was conducted by~\citep{ZhengLuiDoerrAAAI22} on $2$-\LOTZ and $2$-\OMM, followed by results on the multimodal problem $2$-\OJZJ~\citep{Qu2022PPSN}, the usefulness of crossover~\citep{Dang2024,DoerrQ23b}, noisy environments~\citep{Dang2023a}, approximation guarantees for covering the Pareto front~\citep{Zheng2022}, lower bounds~\citep{DoerrQu2022}, trap functions~\citep{Dang2024Illustrating}, and stochastic population update~\citep{Bian2023}. It has also been shown that \algnsga can outperform GSEMO exponentially, which relies solely on dominance relations~\citep{Lessons}. In addition, there are results for combinatorial optimisation problems such as the minimum spanning tree problem~\citep{Cerf2023} and the subset selection problem~\citep{MOEASubset}. 

Runtime analyses of other popular MOEAs on simple benchmark functions have only emerged in recent years. These include \algsms~\citep{Zheng_Doerr_2024,ijcai2025p988}, SPEA2~\citep{SPEA22024,DoerrKS2026}, variants of \algnsga~\citep{Krejca2025b}, \algnsgaiii~\citep{WiethegerD23,Opris2024,NearTight2024,OPRIS2026,OprisMultimodal,Opris26PopDyn}, and PAES-25~\cite{Opris2025PAES}. 
There are also several results on approximating the Pareto front when its size exceeds the population size, for example, for 
\algnsga~\citep{Zheng2022}, \algnsgaiii~\citep{ApproximationNSGAIII}, and SPEA2~\citep{ApproximationSPEA2}. Overall, the study of MOEAs is a highly active research area, with even the behaviour of simple algorithms such as GSEMO still being researched~\cite{doerr2025tightruntimeGSEMO}. However, the working principles of \algnsga and \algnsgaiii differ significantly from those of \algspeaii when handling dominated layers. The treatment of the non-dominated layer in \algspeaii is relatively well understood due to its truncation operator, which preserves non-dominated solutions~\cite{SPEA22024} and ensures a good spread along the Pareto front~\cite{DoerrKS2026}. 
In contrast, its behaviour on dominated layers remains largely unexplored once the density estimation comes into play. Currently, there are essentially no rigorous results on whether and how promising solutions can be identified in dominated layers or whether this operator is suitable for optimisation at all.

\section{Preliminaries}\label{sec:prelim} 

%
%
For $n \in \N$, define $[n] := \{1,\dots,n\}$ and
$[n]_0 \coloneqq [n] \cup \{0\}$.
The natural logarithm and that of base-2 are denoted $\ln(\cdot)$ and $\log(\cdot)$ respectively.
%
%
For a bit string $x:=(x_1,\dots,x_n)\in\{0,1\}^n$, we use $\ones{x}$ to denote
its number of $1$-bits, \ie $\ones{x}=\sum\nolimits_{i=1}^{n}x_i$,
and $\zeroes{x}$ to denotes the number of zeroes,
\ie $\zeroes{x}
=n-\ones{x}$.
For two points $X,Y$ with coordinates $X=(x_1,x_2)$ and $Y=(y_1,y_2)$ on the plane, $X$ is said to be weakly to the right of $Y$ if $x_1\geq y_1$. 
%

For a $d$-objective function $f\colon\{0,1\}^n\rightarrow \R^{d}$, 
and for $x, y \in \{0, 1\}^n$, we say $x$ \emph{weakly dominates} $y$
written as $x \succeq y$ (or $y \preceq x$) if $f_i(x) \geq f_i(y)$ for all $i\in[d]$.
$x$ \emph{dominates} $y$ written as $x \succ y$ (or $y \prec x$)
if one inequality is strict. 
Both the weakly-dominance and dominance relations are \emph{transitive},
\eg $x \succ y \wedge y \succ z$ implies $x \succ z$. 
A set of points which covers all possible fitness values not
dominated by any other points in $f$ is called a \emph{Pareto-optimal set} of $f$. 
The image of a Pareto optimal set through $f$ is called the \emph{Pareto front} of $f$. 
A 
solution on the Pareto front is referred to as a \emph{Pareto optimal} solution.

We say an algorithm $\mathcal{A}$ has optimised a function $f\colon\{0,1\}^n\rightarrow \R^{d}$ 
if it has produced (or output) a Pareto-optimal set of $f$. 

\smallskip%
\noindent\textbf{The SPEA2 algorithm:}\label{sec:algo-spea2}
%
\algspeaii~\cite{SPEA2original} is a prominent MOEA and is summarised in 
Algorithm~\ref{alg:SPEA2} for the use of the standard bit mutation on $\{0,1\}^n$
and uniform parent selection.
The algorithm employs an archive $A_t$ alongside a regular population $P_t$. In the
standard notation of the ($\mu$+$\lambda$)~EA scheme, $A_t$ corresponds to the parent population ($|A_t|=\mu$) and $P_t$ corresponds to the offspring population ($|P_t|=\lambda$). 
In each generation, $\lambda$ offspring individuals are
independently created by copying and varying parents selected from $A_t$, and then 
the $\mu$ best individuals of $A_t \cup P_t$ survive to the next generation 
and become the next archive $A_{t+1}$. This survival selection is elitist 
(the plus in $\mu$+$\lambda$) in the sense that non-dominated solutions are 
definitely preferred over dominated ones. To accomplish this, \algspeaii 
first assigns fitness values to solutions, and then employs 
two different operators based on the number  of non-dominated solutions $A'_t$ 
found in the union $A_t \cup P_t$. The chosen operator turns $A'_t$ into 
a population of $\mu$ individuals, which becomes the new archive $A_{t+1}$.

\begin{algorithm2e}[ht] 
\DontPrintSemicolon
\SetKw{KwTo}{to}    
\SetKw{KwOr}{or}    
Initialise $A_0\sim\Unif((\{0,1\}^n)^{\mu})$ and $P_0=\emptyset$; if duplicate elimination is enabled, then use sampling without replacement\;
\For{$t=0$ \KwTo $\infty$}{
    Compute fitness $f(x)=R(x)+D(x)$ for each $x\in A_t \cup P_t$ (see Eq.~\eqref{eq:spea2-fitness})\;
    Let $A_t'$ be the non-dominated solutions (those with $R(x)=0$) from $A_t \cup P_t$\;
    \lIf{$|A_t'| \geq \mu$}{%
    reduce $A_t'$ by applying the truncation operator
    }
    \lElse{%
    expand $A'_t$ 
    by applying the completion operator based on fitness
    }
    Let $A_{t+1}=A_t'$\;
    Let $P_{t+1} = \emptyset$\; 
    \For{$i=1$ \KwTo $\lambda$}{
    Select a solution $x$ from $A_{t+1}$ uniformly at random\;
    Generate $x'$ by flipping each bit of $x$ independently with prob. $1/n$\;    
    \lIf{\text{duplicate elimination is disabled} \KwOr $x'\notin (A_{t+1} \cup P_{t+1})$}{$P_{t+1} = P_{t+1} \cup \{x'\}$}
    }\label{alg:SPEA2:line:add-offspring}
    }
\caption{SPEA2~\cite{SPEA2original} with standard bit mutation on $\{0,1\}^n$}
\label{alg:SPEA2}
\end{algorithm2e}

If $|A'_t|\geq \mu$, a truncation operator is applied to reduce the size of
$A'_t$ to~$\mu$. The operator removes individuals one by one, starting with the
one with the smallest distance to its nearest neighbour in the fitness space,
and ties are broken with the distances to the second, then the third and so on
nearest neighbours. Formally, let $\inearestdist{i}{P}{x}$ be the Euclidean 
distance in the objective space of $x$ to its $i$-th nearest neighbour from a 
population $P$ that contains $x$, and 
consider the following relations $\nearestdistlt{P}$ and $\nearestdistleq{P}$
on the elements of $P$: 
\begin{align}
x \nearestdistlt{P} y 
    &\Leftrightarrow  
    \left(\exists 0<\ell<|P|\colon (\forall 0<i<\ell: \inearestdist{i}{P}{x} = \inearestdist{i}{P}{y}) \wedge \inearestdist{\ell}{P}{x} < \inearestdist{\ell}{P}{y}\right),
    \label{eq:spea2-nearestdistlt}\\
x \nearestdistleq{P} y 
    &\Leftrightarrow 
    \left(x \nearestdistlt{P} y\right) \vee \left(\forall 0<i<|P|\colon \inearestdist{i}{P}{x} = \inearestdist{i}{P}{y} \right).
    \label{eq:spea2-nearestdistleq}
\end{align}
The operator then repeatedly removes and updates the smallest individual 
based on $\nearestdistleq{A'_t}$, breaking ties arbitrarily or uniformly 
at random, until $|A'_t| = \mu$. The strict order $\nearestdistlt{P}$ will 
be useful later for our analysis as follows. For some $y\in A'_t$, if one can show 
that there always exists a $x\in A'_t\setminus\{y\}$ with $x\nearestdistlt{P} y$
during the iterative removal-update process, then $y$ is guaranteed to remain in $A'_t$. 


Otherwise, if $|A'_t|<\mu$, a completion operator based on fitness is applied 
to fill in $A'_t$ with $\mu-|A'_t|$ dominated individuals from $(A_t \cup P_t)
\setminus A'_t$. A fitness $f(x)$ is assigned for each individual $x$ of $A_t
\cup P_t$, and it contains two components:
\begin{align}
f(x)&:=R(x) + D(x),\label{eq:spea2-fitness}\\
\text{where } R(x)&:=\sum_{y\in P_t\cup A_t\colon y\succ x}S(y) \text{ with } S(x):=|\{y \in A_t\cup P_t\mid y \succ x\}|,\label{eq:spea2-raw-fitness}\\
\text{and } D(x)&:=1/(2+\inearestdist{k}{A_t \cup P_t}{x}) \text{ with } k:=\lfloor \sqrt{\mu+\lambda}\rfloor.\label{eq:spea2-kth-nearest-density}
\end{align}
The first component $R(x)$, called the \emph{raw fitness}, is based on the
notion of \emph{Pareto strength} $S$ of solutions. That is, $S(x)$ is the number
of solutions that $x$ dominates and $R(x)$ is the sum of strengths of the
solutions that dominate $x$. The second component $D(x)$, called the \emph{density
estimate} of $x$, uses the distance to the $k$-th nearest neighbour in the objective 
space 
to estimate
how crowded the area surrounding $f(x)$ is in $f(A_t\cup P_t)$. The formulation above
ensures that $D(x)\in (0,1)$ while $R(x)$ are integers, thus $D(x)$ is a secondary 
selection criterion for solutions with identical raw fitness. 
The solutions of $(A_t \cup P_t) \setminus A'_t$ are sorted in an ascending order 
of fitness, and the $\mu-|A'_t|$ first solutions are added to $A'_t$.

Our description of \algspeaii deviates slightly from the original algorithm in the initialisation phase. Specifically, in the original paper, $P_0$ is initialised uniformly at random, while $A_0$ is set to be empty. However, the subsequent survival selection transfers all individuals from $P_0$ to $A_1$ in any case. In contrast, in our description, $A_0$ is initialised uniformly at random and then carried over to $A_1$. Therefore, the two formulations are in fact equivalent.
 
Unlike \algnsga where solutions are ranked by the layer of non-dominated solutions 
that they belong to, and further by a density estimate using crowding distances, 
\algspeaii ranks its solutions by the Pareto strength of those that dominate 
them and then by a density estimate based on the distance to the nearest 
neighbours in the objective space. 
There is an asymmetry in the use of density estimations of \algspeaii, that 
is when $A'_t\geq \mu$ all the distances are used in the density estimation, but 
when $A'_t < \mu$, only the $k$-th smallest distance is used.

\smallskip%
\noindent\textbf{An improved algorithm, \algspeaiisim:}
We consider a more symmetric variant of the completion procedure as follows. 
The solutions of $(A_t \cup P_t) \setminus A'_t$ are removed one-by-one until the 
(multi-)set size reaches $\mu - |A'_t|$, after which the remaining solutions are added to $A'_t$.
The removal prioritises the solution $x$ with the smallest reciprocal 
raw fitness $1/R(x)$, and ties are broken according to the distances to the first, 
then second and so on, nearest neighbours. This is similar to the truncation
operator, except that we introduce a primary criterion based on $R(x)$.
Formally, we define $\inearestdist{0}{P}{x}:=1/R(x)$ and augment the definition
of $\nearestdistlt{P}$ to $\nearestdistleqzero{P}$ by allowing the index 0 for 
$i$ and $\ell$ in Eqs.~\eqref{eq:spea2-nearestdistlt} \eqref{eq:spea2-nearestdistleq}. 
The completion procedure
then repeatedly removes the smallest individual of $(A_t \cup P_t) \setminus A'_t$
with respect to $\nearestdistleqzero{(A_t \cup P_t) \setminus A'_t}$ until its size is 
$\mu - |A'_t|$, and subsequently merges the remaining individuals into $A'_t$.
The variant of \algspeaii that employs this completion procedure instead of the
original one is referred to as \algspeaiisim. 

\smallskip%
\noindent\textbf{OneTrapZeroTrap:}\label{sec:otzt-function}
%
The \OTZTfull function (\OTZT for short) for $x\in \{0,1\}^n$ is:
\begin{align*}
\OTZT(x) := \left(\sum_{i=1}^{n}x_i + (n+1)\prod_{i=1}^{n}(1-x_i),
                  \sum_{i=1}^{n}(1-x_i) + (n+1)\prod_{i=1}^{n}x_i\right).
\end{align*}

The function has the same fitness vector as the \OMMfull function (which 
ignores the product parts in the above formula) for all points in $\{0,1\}^n$ 
except for the two points $1^n$ and $0^n$ (where the product parts become 
relevant). Particularly, $f(1^n)=(n,n+1)$ and $f(0^n)=(n+1,n)$, thus these 
two points dominate all other search points of $\{0,1\}^n$ but are incomparable 
with each other and hence Pareto optimal.
The function name comes from the fact that the first objective 
$\OTZT_1(x)$ is the well-known \TRAP function of the input string, while 
the second objective $\OTZT_2(x)$ is that of the complement string to the input.


\smallskip%
\noindent\textbf{Why duplicate elimination is essential:}\label{sec:remove-genotype-dup}
%
%
It has been shown in~\cite{Dang2024Illustrating,DangOS2025} that without
the diversity mechanism of (genotype) duplicate elimination when generating 
offspring populations, any algorithm that employs the ($\mu$+$\mu$) survival 
selection scheme with $\mu=o(\sqrt{n})$ requires exponential time to optimise 
the \OTZTfull function. The reason is that once a Pareto optimal solution is 
found, it can easily be cloned, and its clones will take over the population. 
This result can be generalised to the ($\mu$+$\lambda$) selection of 
the vanilla (default version of) \algspeaii and \algspeaiisim. 

\begin{toappendix}
This appendix contains full proofs for all statements whose proof was omitted or only sketched in the main part, so that the reviewers may check the correctness of our results at their discretion. The appendix will not be part of the published paper, should it be accepted, but a full version will be made available on a preprint server.
\end{toappendix}

\begin{theoremrep}[Generalised from Theorem~4.4 in~\cite{DangOS2025}]\label{thm:require-duplicate-elimination}
The vanilla \algspeaii and \algspeaiisim (without genotype duplicate elimination)
using uniform parent selection, standard bit mutation, and $\mu$,$\lambda \in o(\sqrt{n})$ 
require $\Omega(n^{n})$ fitness evaluations in expectation to cover the Pareto 
front of \OTZTfull.
\end{theoremrep}
\begin{appendixproof}
The arguments below only rely on the $(\mu+\lambda)$ selection scheme, thus
they hold for both algorithms, thus we can assume \algspeaii. 
Let $F:=\{0^n, 1^n\}$ be the unique Pareto optimal set of \OTZT. By a union 
bound the probability that the initial population does not find this set directly
is at least $1- 2\cdot \mu\cdot  2^{-n}=1-o(1)$.  

Assuming the above event happens, we first estimate the probability that both optima 
can be found in one generation $t$ of the algorithm, conditioned on one being found, 
by looking at its complement event $E_1$.
We optimistically assume that when a solution is selected, the probability
generating an individual in $F$ is $2/n$, as at least one bit needs to be flipped
and the factor $2$ here is from the union bound on $|F|$. 
Let $X$ be the number of solutions of $F$ in $P_{t+1}$, then the probability
of discovering $F$ is
\[
p_1 
    = \prob{X\geq 1} 
    = 1 - \left(1-\frac{2}{n}\right)^{\lambda}
    \geq \frac{2\lambda/n}{2\lambda/n+1}
    = \frac{2\lambda/n}{o(1)+1} = \Omega\left(\frac{\lambda}{n}\right),
\]
while that of fully covering the front is at most, here using Bernoulli's inequality,
\begin{align*}
p_2 &= \prob{X\geq 2}
     = 1 - \prob{X=0} - \prob{X=1}\\
    &= 1 - \left(1-\frac{2}{n}\right)^{\lambda} - {\lambda \choose 1}\left(\frac{2}{n}\right)\left(1-\frac{2}{n}\right)^{\lambda-1}\\
    &= 1 - \left(1-\frac{2}{n}+\frac{2\lambda}{n}\right)\left(1-\frac{2}{n}\right)^{\lambda-1}\\
    &\leq 1 - \left(1+\frac{2(\lambda-1)}{n}\right)\left(1-\frac{2(\lambda-1)}{n}\right)
    = \frac{4(\lambda-1)^2}{n^2} = O\left(\frac{\lambda^2}{n^2}\right)
\end{align*}
Thus $\prob{E_1}\leq p_2/p_1=O(\lambda/n)=o(1)$, so with probability at least 
$1-\prob{E_1} =1-o(1)$, the Pareto front is not covered in the generation in which
the first Pareto-optimal is found. Again, assume this occurs, we now estimate
the probability that the first discovered solution $x\in F$ takes over the population 
before the other solution $y\in F\setminus\{x\}$ is found. An offspring production
is called \emph{good} if $x$ is cloned, and the event is denoted $G$. 
This event occurs with a probability of at least 
$\prob{G}=\frac{1}{\mu}\left(1-\frac{1}{n}\right)^{n}\geq\frac{1}{4\mu}$ 
for $n\geq 2$. The production is bad, \ie the event is denoted $B$, if 
$y$ is created and $\prob{B}\leq \frac{1}{n}$, as at least one bit needs to 
be flipped. For $x$ to take over the population, it suffices that $(\mu-1)+(\lambda-1)=
\mu+\lambda-2$
good events occur before the first bad one. Here, the $\lambda-1$ term takes
into account that the $(\mu-1)$-th good event can occur at the first offspring
production of the $\lambda$ ones in a generation, thus the $\lambda-1$ remaining 
productions still need to be good. The probability of the event is
\begin{align*}
\left(\frac{\prob{G}}{\prob{G}+\prob{B}}\right)^{\mu+\lambda-2}
  &= \left(\frac{1/(4\mu)}{1/(4\mu) + 1/n}\right)^{\mu+\lambda-2}\\
  &= \left(1-\frac{1/n}{1/(4\mu) + 1/n}\right)^{\mu+\lambda-2}
   \geq \left(1-\frac{4\mu}{n}\right)^{\mu+\lambda-2}\\
  &\geq 1 - \frac{4\mu (\mu+\lambda-2)}{n} = 1 - o(1).
\end{align*}
After the series of events, 
which occur with probability $1-o(1)$, the entire population $A_t$ 
only contains the Pareto-optimal solution $x$. 
It then takes $\Omega(n^{n})$ 
fitness evaluations in expectation to create the other Pareto-optimal solution 
$y$ as all bits must be flipped. This is also the expected running time of 
the algorithm.
\end{appendixproof}

In the following, we therefore consider algorithms with genotype 
duplicates being removed at initialisation and during the run.

\section{Drawback of the Density Estimation in SPEA2}\label{sec:density-estimation}

In this section, we show that SPEA2 with duplicate elimination and density estimation cannot cover the Pareto front of \OTZTfull with high probability. The issue arises when one Pareto-optimal search point has already been found (say $1^n$), while the other has not. In this situation, it becomes difficult to maintain solutions far away from $1^n$ as there is no useful signal to diversify the dominated individuals. More precisely, we show that, with constant probability, within a constant number of generations, all individuals with at least $n/2$ zeros are removed. We refer to this as a \emph{wipeout} as it wipes out much progress towards reaching the missing Pareto-optimal search point, $0^n$. 

The key idea of the proof is that a search point $x^*$ with at least $n/2$ zeros can be removed when mutation creates many offspring close to~$x^*$, forming a cluster of exactly $k+1$ search points with a small $k$-th nearest neighbour distance that can then be removed all at once in the selection step. In other words, a whole cluster of individuals may suddenly die from overpopulation. To our knowledge, such an effect has only been proven once before, for the fitness sharing diversity mechanism~\cite{Oliveto2019}.



\begin{lemmarep}
\label{lem:elimination}
Consider SPEA2 with $\mu,\lambda \in O(1)$, $\lambda \ge k$ and duplicate elimination on \OTZTfull at some time~$t_0$. Suppose that $1^n \in A_{t_0}$, but $0^n \notin A_{t_0}$. Define $B_t := \{x \in A_t \mid \zeroes{x} \geq n/2\}$. 
%
There is a constant ${\cwipeout > 0}$ (depending on $\mu, \lambda)$ such that, with probability at least $\cwipeout$, we have $B_{t^*} = \emptyset$ for some $t^* \in [t_0, t_0 + (\lceil{\mu/\lambda}\rceil+1)|B_{t_0}|]$, or $0^n$ is discovered during this time span.
\end{lemmarep}

The condition $\lambda \ge k$ is necessary for creating a cluster of $k+1$ similar search points. It is implied by $\mu \le \lambda(\lambda-1)$, which holds, \eg, for $\mu=\lambda \ge 2$.
To prove Lemma~\ref{lem:elimination}, we introduce a notion of isolated individuals whose $k$-nearest neighbour distance is guaranteed to be at least~$2$.
\begin{definition}
\label{def:isolated}
    We call an individual $x \in P$, $x \notin \{0^n, 1^n\}$, isolated in a population $P$ if no search point $y \in P \setminus \{x\}$ satisfies $f_1(y) \in \{f_1(x)-1, f_1(x), f_1(x)+1\}$ or $f_2(y) \in \{f_2(x) - 1, f_2(x), f_2(x) + 1\}$.
\end{definition}

Now we show that isolated individuals have a larger $k$-th nearest neighbour distance than individuals in a cluster of search points with $i-1$ or $i$ zeros.
\begin{lemmarep}
\label{lem:properties-of-distances}
    Considering the function \OTZTfull, for every population $P$, the following statements hold.
    \begin{enumerate}
        \item If $x \in P$ is isolated in~$P$, then\footnote{We remark that the converse is not true: $x$ can have a large value of $\inearestdist{k}{P}{x}$, but the closest $k-1$ search points can be arbitrarily close or even identical to~$x$.} $\inearestdist{k}{P}{x} \ge 2$.
        \item $\inearestdist{k}{P}{x} = 0$ iff $P$ contains at least $k$ search points with fitness vector $f(x)$.
        \item For any $i \in \{2, \dots, n-1\}$, if $P$ contains $k+1$ search points $y_1, \dots, y_{k+1}$ with $|y_j|_0 \in \{i-1, i\}$ for all $j \in [k+1]$ then $\inearestdist{k}{P}{y_j} \le \sqrt{2}$ for all $j \in [k+1]$.
    \end{enumerate}
\end{lemmarep}
\begin{appendixproof}
For the first statement, both objective values of an individual nearest to an isolated one differ from those of the isolated individual by at least 2. Hence, the nearest distance is at least $\inearestdist{1}{P}{x} \ge \sqrt{2+2} = 2$ and since $\inearestdist{k}{P}{x} \ge \inearestdist{1}{P}{x}$, the same lower bound also holds for the $k$-th nearest distance.

Regarding the second statement, the ``if'' part is obvious. If $\inearestdist{k}{P}{x} = 0$ then the $k$ nearest individuals must have Euclidean distance 0 to~$x$, implying that their fitness vector must be~$f(x)$.

The third statement follows since, in each objective, the $k$ closest objective vectors only differ by at most~1 from that of~$x$. Hence, the Euclidean distance to the $k$-th nearest neighbour is at most $\sqrt{1+1} = \sqrt{2}$.
\end{appendixproof}

\ifarxiv
We now use Lemma~\ref{lem:properties-of-distances} to prove Lemma~\ref{lem:elimination}.
\fi

\begin{proofsketch}[of Lemma~\ref{lem:elimination}]
We first show the following statement: There is a constant $c > 0$ (depending on $\mu, \lambda$) such that, with probability at least $c$, we have $|B_{t'}| < |B_{t_0}|$ for some $t' \in [t_0, t_0 + \lceil \mu/\lambda \rceil+1]$. 

All individuals distinct from $0^n$ and $1^n$ are dominated by~$1^n$. Hence, as long as no search point $0^n$ is created, the raw fitness value of all dominated solutions is the same. So, only the density estimate determines the survival of solutions. Since the density estimate $D(x)$ is strictly decreasing in the $k$-th nearest neighbour distance $\inearestdist{k}{A_t \cup P_t}{x}$, $A_{t+1}$ will contain $1^n$ and the $\mu-1$ individuals from $A_t \cup P_t$ with the largest $k$-th nearest neighbour distance.

We apply the method of typical runs~\cite[Section~5.6]{Jansen2013} and define a sequence of phases, each with a specific goal. For each phase, we estimate the probability of achieving its respective goal by stating and analysing sufficient conditions for success. A phase begins once the goal of the preceding phase has been achieved. The goal of the final phase implies the desired statement. 
We use $Z_i \coloneqq \{x \mid \zeroes{x} = i\}$ to denote the set of search points with $i$ zeros.

\noindent\textbf{Phase~1:} The goal is to evolve a population $A_{t_1}$ in which $B_{t_1} \subseteq B_{t_0}$ and every $x \in A_{t_1}$ satisfies $\inearestdist{k}{A_{t_1}}{x} \ge 2$.

This can be achieved by creating isolated search points. To this end, we repeatedly select $1^n$ as a parent and flip a suitable number of bits to create an offspring with a unique fitness vector satisfying Definition~\ref{def:isolated}. This can be done by flipping up to $3(\mu+\lambda) = O(1)$ many arbitrary bits in $1^n$. If the next $\mu$ offspring are created in this way, the population consists only of isolated individuals, and the goal follows by Lemma~\ref{lem:elimination}. Since $\mu,\lambda=O(1)$, both selecting $1^n$ and flipping a prescribed constant number of bits occur with constant probability, and the probability of a sequence of $O(1)$ such events is still bounded from below by a constant $d_1>0$.

\noindent\textbf{Phase~2:} The goal is, for some $x^* \in B_{t_1}$, to generate~$P_{t_1}$ by creating up to $k$ offspring in $Z_{\zeroes{x^*}-1}$ such that $A_{t_1} \cup P_{t_1}$ contains exactly ${k+1}$ individuals $y_1, \dots, y_{k+1} \in (Z_{\zeroes{x^*}} \cup Z_{\zeroes{x^*}-1})$ (which includes $x^*$), 
all other offspring in $P_{t_1}$ have at most $3(\mu+\lambda)$ zeros and are isolated in $A_{t_1} \cup P_{t_1}$. 


To achieve this, fix $x^* \in B_{t_1}$ and repeatedly select it as a parent, flipping a single $0$-bit. As duplicate solutions are not accepted, not all such flips succeed; however, at least $\zeroes{x^*}-\mu-\lambda \ge n/2 - O(1)$ bits are suitable, so a mutation succeeds with probability at least $(n/2-O(1))/(en) = \Omega(1)$. The remaining offspring are generated as in Phase~1. Hence, the probability of realising this phase is bounded from below by a constant $d_2>0$.

\noindent\textbf{Phase~3:} The goal is for selection to create a population $A_{t_1+1}$ with ${B_{t_1+1} \subsetneq B_{t_0}}$, thus $|B_{t_1+1}| < |B_{t_0}|$.

If Phase~2 succeeds, the $k+1$ individuals $y_1,\dots,y_{k+1}$ have $k$-th nearest-neighbour distance at most $\sqrt{2}$ by Lemma~\ref{lem:properties-of-distances}, and are therefore among the worst-ranked individuals.  If there are more than $\lambda$ individuals of $k$-th nearest neighbour distance at most~$\sqrt{2}$, we may need to rely on uniform tie breaking selecting $y_1, \dots, y_{k+1}$ for removal. Since $\mu, \lambda = O(1)$, the probability of creating an appropriate $A_{t_1+1}$ is bounded from below by a constant~$d_3 > 0$.

If Phase~3 succeeds, $B_{t_1+1} \subseteq B_{t_0} \setminus \{x^*\}$, which implies $B_{t_1+1} \subsetneq B_{t_0}$. 
Together, this shows the statement from the start of the proof for the constant probability bound of $c \coloneqq d_1 \cdot d_2 \cdot d_3 > 0$.

The statement of the lemma now follows from applying this statement repeatedly, up to $|B_{t_0}| \le \mu$ times, noting that these events are independent because they consider disjoint events over time. The probability bound is at least $c^{|B_{t_0}|} \ge c^{\mu}$, and we can take $\cwipeout \coloneqq c^\mu = \Omega(1)$ since $\mu = O(1)$.
\end{proofsketch}

\begin{appendixproof}[Proof of Lemma~\ref{lem:elimination}]
We ignore the possibility that the statement may be fulfilled by creating $0^n$ (we shall argue later that this is unlikely). We first show the following statement: There is a constant $c > 0$ (depending on $\mu, \lambda$) such that, with probability at least $c$, we have $|B_{t'}| < |B_{t_0}|$ for some $t' \in [t_0, t_0 + \lceil \mu/\lambda \rceil+1]$. 

All individuals distinct from $0^n$ and $1^n$ are dominated by~$1^n$. Hence, as long as no search point $0^n$ is created, the raw fitness value of all dominated solutions is the same. So, only the density estimate determines the survival of solutions. Since the density estimate $D(x)$ is strictly decreasing in the $k$-th nearest neighbour distance $\inearestdist{k}{A_t \cup P_t}{x}$, $A_{t+1}$ will contain $1^n$ and the $\mu-1$ individuals from $A_t \cup P_t$ with the largest $k$-th nearest neighbour distance.

We apply the method of typical runs~\cite[Section~5.6]{Jansen2013} and define a sequence of phases, each with a specific goal. For each phase, we estimate the probability of achieving its respective goal by stating and analysing sufficient conditions for success. A phase begins once the goal of the preceding phase has been achieved. The goal of the final phase implies the desired statement. We pessimistically ignore the possibility that the final goal may already be achieved during an earlier phase. We use $Z_i \coloneqq \{x \mid \zeroes{x} = i\}$ to denote the set of search points with $i$ zeros.

\noindent\textbf{Phase~1:} The goal is to evolve a population $A_{t_1}$ in which $B_{t_1} \subseteq B_{t_0}$ and every $x \in A_{t_1}$ satisfies $\inearestdist{k}{A_{t_1}}{x} \ge 2$.


\noindent\textbf{Phase~2:} The goal is, for some $x^* \in B_{t_1}$, to generate~$P_{t_1}$ by creating up to $k$ offspring in $Z_{\zeroes{x^*}-1}$ such that $A_{t_1} \cup P_{t_1}$ contains exactly ${k+1}$ individuals $y_1, \dots, y_{k+1} \in (Z_{\zeroes{x^*}} \cup Z_{\zeroes{x^*}-1})$ (which includes $x^*$), 
all other offspring in $P_{t_1}$ have at most $3(\mu+\lambda)$ zeros and are isolated in $A_{t_1} \cup P_{t_1}$.


\noindent\textbf{Phase~3:} The goal is for selection to create a population $A_{t_1+1}$ with ${B_{t_1+1} \subsetneq B_{t_0}}$, thus $|B_{t_1+1}| < |B_{t_0}|$.


\smallskip%
\noindent\textbf{Analysing Phase~1:} A sufficient condition for reaching this goal is the intersection of the following events. For $t = t_0 +1$ up to $t = t_0 + \lceil \mu/\lambda\rceil$, the algorithm only generates offspring that are isolated in $A_t \cup P_t$ and have less than $n/2$ zeros. 
By Lemma~\ref{lem:properties-of-distances}, isolated individuals have a $k$-th nearest neighbour distance of at least~2. Let $\varphi(A_t)$ denote the number of search points with $k$\nobreakdash-th nearest neighbour distance of at least~2 in $A_t$. Since selection is performed according to the largest $k$-th nearest neighbour distance, if the stated events occur, $\varphi(A_{t+1}) = \min \{\varphi(A_t) + \lambda, \mu\}$. Then, at some time $t_1 \le t + \lceil \mu/\lambda\rceil$, we will have $\varphi(A_{t_1}) = \mu$ and $B_{t_1} \subseteq B_{t_0}$.

We estimate the probability of the above events. Let us call a value $v \in [n-1]$ \emph{unoccupied} if an offspring with $v$ zeros would be isolated in the set containing $A_t$ and all offspring created in generation~$t$ so far. For every offspring created in generation~$t$, at most three fitness vectors become occupied. Hence, at most $3(\mu+\lambda-1)$ fitness vectors from $V:=\{2, \ldots , 3(\mu+\lambda)\}$ are occupied, and there exists a unoccupied value $v \in V$. A sufficient condition to generate an offspring with $v$ zeros is to choose $1^n$ as a parent, and then to flip $v$ bits, creating $v$ zeros. The probability of choosing $1^n$ as parent and flipping $1 \le v \le 3(\mu+\lambda)$ bits is at least 
\[
    \frac{1}{\mu}\binom{n}{v} \left(\frac{1}{n}\right)^{v} \left(1 - \frac{1}{n}\right)^{n-v} \ge \frac{1}{e\mu \cdot v!} \ge \frac{1}{e\mu \cdot (3(\mu+\lambda))!}
\]
and hence the sought probability is at least 
\begin{align*}
\left(\frac{1}{e\mu \cdot (3(\mu + \lambda))!}\right)^{\lambda \cdot \lceil \mu/\lambda \rceil} \geq d_1
\end{align*}
for a sufficiently small constant $d_1>0$. Note that these offspring have less than $n/2$ zeros since $3(\mu+\lambda) = O(1) < n/2$ for sufficiently large~$n$.

\smallskip%
\noindent\textbf{Analysing Phase~2:} Now fix an arbitrary search point $x^* \in B_{t_1}$ and let $\ell \coloneqq |A_{t_1} \cap  (Z_{\zeroes{x^*}} \cup Z_{\zeroes{x^*}-1})|$. Since Phase~1 has ended successfully, all distances in $A_{t_1}$ are at least~2. This implies $\ell < k+1$ as otherwise the considered search points would have a $k$-th nearest neighbour distance of at most~$\sqrt{2}$ by Lemma~\ref{lem:properties-of-distances}, a contradiction.
To generate an offspring $y \in (Z_{\zeroes{x^*}} \cup Z_{\zeroes{x^*}-1})$, it suffices to choose $x^*$ as a parent, and to flip a single 0-bit to create an individual which is different to those in $A_t$, and those previously generated in that generation to avoid genotypic duplicates. This happens with probability at least 
\begin{align*}
\frac{1}{\mu} \cdot \frac{\zeroes{x^*}-\mu-\lambda}{n} \cdot \left(1-\frac{1}{n}\right)^{n-1} \geq \frac{1}{3e \mu}
\end{align*}
for $n$ sufficiently large. 
Phase~2 is successful if exactly $k-\ell+1$ such offspring are generated (which is feasible since $\lambda \ge k$), and the remaining $\lambda-k+\ell-1$ offspring are isolated in $A_{t_1} \cup P_{t_1}$ and have at most $3(\mu+\lambda)$ zeros. 
Re-using arguments from the analysis of Phase~1, the considered events happen with probability at least
\begin{align*}
\binom{\lambda}{k-\ell} \cdot \left(\frac{1}{3e \mu}\right)^{k-\ell+1} \cdot \left(\frac{1}{e\mu \cdot (3(\mu + \lambda))!} \right)^{\lambda-k+\ell-1} \geq d_2
\end{align*}
for a sufficiently small positive constant $d_2>0$. 

\smallskip%
\noindent\textbf{Analysing Phase~3:} if Phase~2 ends successfully, the $k+1$ individuals $y_1, \dots, y_{k+1}$ have a $k$-th nearest neighbour distance of at most $\sqrt{2}$ by Lemma~\ref{lem:properties-of-distances}, while all other individuals in $A_{t_1} \cup P_{t_1}$ are isolated in $(A_{t_1} \cup P_{t_1}) \setminus Z_{\zeroes{x^*}-1}$. Note that individuals\footnote{We cannot rely on selection removing search points in $A_{t_1} \cap Z_{\zeroes{x^*}-2}$ to shrink $B_t$ as we do not know whether they are contained in $B_t$. They are not contained in~$B_t$ if $\zeroes{x^*} \le n/2+1$.} in $A_{t_1} \cap Z_{\zeroes{x^*}-2}$ may no longer be isolated in $A_{t_1} \cup P_{t_1}$ owing to the offspring from $Z_{\zeroes{x^*}-1}$.
However, they will retain a $k$-th smallest distance of at least $\sqrt{2}$ by the second statement of Lemma~\ref{lem:properties-of-distances} since no offspring were added to $Z_{\zeroes{x^*}-2}$.
If $A_{t_1} \cup P_{t_1}$ contains $s$ search points of $k$-th smallest distance at most~$\sqrt{2}$, the probability of choosing $y_1, \dots, y_{k+1}$ for removal is 1 if $s \le \lambda$ and if $s > \lambda$, it is at least
\[
    \binom{\lambda}{k+1}/\binom{s}{\lambda} = \frac{\lambda!\lambda!(s-\lambda)!}{(k+1)!(\lambda-k-1)!s!} \ge d_3
\]
for a constant $d_3 > 0$.

If Phase~3 succeeds, $B_{t_1+1} \subseteq B_{t_0} \setminus \{x^*\}$, which implies $B_{t_1+1} \subsetneq B_{t_0}$. 
Together, this shows the statement from the start of the proof for the constant probability bound of $c \coloneqq d_1 \cdot d_2 \cdot d_3 > 0$.

The statement of the lemma now follows from applying this statement repeatedly, up to $|B_{t_0}| \le \mu$ times, noting that these events are independent because they consider disjoint events over time. The probability bound is at least $c^{|B_{t_0}|} \ge c^{\mu}$, and we can take $\cwipeout \coloneqq c^\mu = \Omega(1)$ since $\mu = O(1)$.
\end{appendixproof}

Now we are in a position to prove the main result from this section, using Lemma~\ref{lem:elimination} to argue that \algspeaii repeatedly experiences wipeouts that destroy progress towards the undiscovered Pareto-optimal search point.
\begin{theoremrep}
Consider SPEA2 with $\mu,\lambda \in O(1)$, $\lambda \ge k$, with or without duplicate elimination on \OTZTfull. The expected number of iterations until SPEA2 covers the entire Pareto front (in particular, until it finds both $1^n$ and $0^n$) is $e^{\Omega(n)}$.
\end{theoremrep}

\begin{proofsketch}
We only need to consider SPEA2 using duplicate elimination because otherwise the result follows from Theorem~\ref{thm:require-duplicate-elimination}. 
Let $t_0$ denote the first iteration in which a first Pareto-optimal search point, $0^n$ or $1^n$, is discovered. By symmetry, we may assume that this is $1^n$. 

Let $B_t$ be as defined in Lemma~\ref{lem:elimination}. We call a generation when $B_t$ is emptied a \emph{wipeout}. A population $A_t$ with $1^n \in A_t$ and $B_t = \emptyset$ is called \emph{safe}. 
By Lemma~\ref{lem:elimination}, along with the fact that creating $0^n$ for the first time has a probability of at most $1/n$, with probability $\Omega(1)$, a wipeout occurs before $0^n$ can be found. 

Starting from a safe population, a necessary condition for finding $0^n$ in the next $T = n/(4\lambda)$ generations without returning to a safe population is that one of the following events happens.
\begin{enumerate}
    \item During $T$ generations, the algorithm evolves a lineage from some search point $x$ containing an individual with a Hamming distance at least $n/2$ to~$x$. Since $\zeroes{x} < n/2$, this is necessary for evolving $0^n$ from a safe population. A necessary event for creating such a lineage is that all standard bit mutations executed in $T$ generations flip at least $n/2$ bits in total. Since the expected number of flipping bits in all $\lambda T$ mutations is $\lambda T = n/4$, the probability of this event is $e^{-\Omega(n)}$ by Chernoff bounds. 
    \item During $T$ generations, no wipeout happens. The probability of this event is estimated as follows. We divide those $T$ generations into $\lfloor T/\tau \rfloor$ periods of $\tau$ generations each. We then apply Lemma~\ref{lem:elimination} at the start of each period. Note that the probability of having a wipeout in any period is at least $\cwipeout$, independently of previous periods. Consequently, the probability of never experiencing a wipeout in $T$ generations is at most $(1-\cwipeout)^{\lfloor T/\tau\rfloor} = e^{-\Omega(\cwipeout \cdot T/\tau)} = e^{-\Omega(T)} = e^{-\Omega(n)}$. 
\end{enumerate}

If none of the above events occurs, the population contains only search points with fewer than $n$ zeros until the next wipeout happens.
A wipeout creates another safe population, allowing us to iterate the above argument. 
Hence, by a union bound over $e^{\varepsilon n}$ such periods, for a suitably small constant $\varepsilon > 0$, $0^n$ is not found in the next $e^{\varepsilon n}$ generations with probability $1 - e^{\varepsilon n} \cdot e^{-\Omega(n)} = 1-e^{-\Omega(n)+\varepsilon n} = 1 - e^{-\Omega(n)}$. This happens with a total probability of $\Omega(1)$, and, by the law of total probability, the expected number of generations to optimise \OTZTfull is $\Omega(1) \cdot e^{\varepsilon n} = e^{\Omega(n)}$.
\end{proofsketch}

\begin{appendixproof}
We only need to consider SPEA2 using duplicate elimination because otherwise the result follows from Theorem~\ref{thm:require-duplicate-elimination}. 
Recall that it generates the initial population using uniform sampling without replacement. The probability that all individuals from $A_0$ do not coincide with $0^n$ and $1^n$ is at least $(1 - 2/(2^n-\mu))^\mu = 1 - e^{-\Omega(n)}$. Suppose that this event, $E_1$, happens. 

Let $t_0$ denote the first iteration in which a first Pareto-optimal search point, $0^n$ or $1^n$, is discovered. By symmetry, we may assume that this is $1^n$. Then, the probability of finding $0^n$ in the same iteration is at most $\lambda/n=o(1)$, since in one iteration it requires flipping one specific bit and $\lambda$ is constant. We assume that $0^n$ is not found in the same iteration, denoted as event $E_2$, and note that then Lemma~\ref{lem:elimination} is in force.

Let $B_t$ be as defined in Lemma~\ref{lem:elimination}. We call a generation when $B_t$ is emptied a \emph{wipeout}. A population $A_t$ with $1^n \in A_t$ and $B_t = \emptyset$ is called \emph{safe}. 
By Lemma~\ref{lem:elimination}, the probability of a wipeout occurring after $(\lceil{\mu/\lambda}\rceil + 1)|B_t| \leq (\lceil{\mu/\lambda}\rceil + 1)\mu \eqqcolon \tau$ generations is at least some constant $\cwipeout > 0$.
If the population obtained after generating $1^n$ is not safe, a wipeout occurs before finding $0^n$ with probability at least $\Omega(1)$, since a mutation generating $0^n$ has a probability of at most $1/n$, and by a union bound over $\tau$ generations and $\lambda$ offspring creations in each one, the probability of generating $0^n$ before a wipeout happens is $o(1)$. We therefore assume that a wipeout happens before $0^n$ is found, denoted as event $E_3$.

Starting from a safe population, a necessary condition for finding $0^n$ in the next $T = n/(4\lambda)$ generations without returning to a safe population is that one of the following events happens.
\begin{enumerate}
    \item During $T$ generations, the algorithm evolves a lineage from some search point $x$ containing an individual with a Hamming distance at least $n/2$ to~$x$. Since $\zeroes{x} < n/2$, this is necessary for evolving $0^n$ from a safe population. A necessary event for creating such a lineage is that all standard bit mutations executed in $T$ generations flip at least $n/2$ bits in total. Since the expected number of flipping bits in all $\lambda T$ mutations is $\lambda T = n/4$, the probability of this event is $e^{-\Omega(n)}$ by Chernoff bounds. 
    \item During $T$ generations, no wipeout happens. The probability of this event is estimated as follows. We divide those $T$ generations into $\lfloor T/\tau \rfloor$ periods of $\tau$ generations each. We then apply Lemma~\ref{lem:elimination} at the start of each period. Note that the probability of having a wipeout in any period is at least $\cwipeout$, independently of previous periods. Consequently, the probability of never experiencing a wipeout in $T$ generations is at most $(1-\cwipeout)^{\lfloor T/\tau\rfloor} = e^{-\Omega(\cwipeout \cdot T/\tau)} = e^{-\Omega(T)} = e^{-\Omega(n)}$. 
\end{enumerate}

If none of the above events occurs, the population contains only search points with fewer than $n$ zeros until the next wipeout happens.
A wipeout creates another safe population, allowing us to iterate the above argument. 
Hence, by a union bound over $e^{\varepsilon n}$ such periods, for a suitably small constant $\varepsilon > 0$, $0^n$ is not found in the next $e^{\varepsilon n}$ generations with probability $1 - e^{\varepsilon n} \cdot e^{-\Omega(n)} = 1-e^{-\Omega(n)+\varepsilon n} = 1 - e^{-\Omega(n)}$. We call this event $E_4$. By another union bound, $\Prob{E_1 \cap E_2 \cap E_3 \cap E_4} \ge 1 - \sum_{i=1}^4 (1-\Prob{E_i}) \ge 1- e^{-\Omega(n)} - o(1) - (1-\cwipeout-o(1)) - e^{-\Omega(n)} = \Omega(1)$. Thus, by the law of total probability, the expected number of generations to optimise \OTZTfull is $\Omega(1) \cdot e^{\varepsilon n} = e^{\Omega(n)}$.
\end{appendixproof}

\section{Improving SPEA2: A Refined Density Estimate}\label{sec:simplified-spea2}

We now show the opposite: \algspeaiisim with duplicate elimination can optimise
\OTZTfull efficiently with a constant population size. For this, we recall the notion of an algorithm operating on $\{0,1\}^n$ being 
$0/1$-monotone from~\cite{Dang2024Illustrating}. Informally speaking, the notion ensures that the maximum number of zeros and the maximum number of ones in the population are non-decreasing over time.

\begin{definition}[Definition~4.1 in~\cite{DangOS2025}]\label{def:01-monotone}
Consider an algorithm~$\mathcal{A}$ optimising a function $f$ on $\{0,1\}^n$ by
evolving a population $A_t$ of search points in each of iteration $t\geq 0$.
$\mathcal{A}$ is is called $0/1$-monotone if
$\forall t\geq0 \colon 
    (\exists x \in A_{t+1}\colon |x|_1 \geq \max\{|z|_1 \mid z \in A_t\}) 
    \wedge 
    (\exists x \in A_{t+1}\colon |x|_0 \geq \max\{|z|_0 \mid z \in A_t\})$. 
\end{definition}

The following lemma shows that the order induced by $\nearestdistleq{P}$ assigns
a higher rank to solutions with unique fitness vectors and to those at the 
extremes of the objective space in a population of non-dominated solutions. 

\begin{lemma}\label{lem:nearestdistlt-property}
Let $\nearestdistlt{P}$ be the order defined by Eq.~\eqref{eq:spea2-nearestdistlt} 
according to a bi-objective function $f\colon\mathcal{X}\rightarrow \R^2$ and a population $P$ of non-dominated 
search points on $\mathcal{X}$ with $|P|\geq 2$, 
and let $x,y$ be two search points in $P$. Then
\begin{enumerate}
\item $(\exists z \in P \setminus\{x\} \colon f(z)=f(x)) 
             \wedge (\forall z \in P \setminus\{y\} \colon f(z)\neq f(y)) 
             \Rightarrow x\nearestdistlt{P} y$.
\item $(|P|\geq 3) \wedge (\forall u,v \in P\colon u\neq v \Rightarrow f(u)\neq f(v)) \;\wedge\;$\\ 
            $(f_1(y)=\max_{z\in P}\{f_1(z)\}\vee f_2(y)=\max_{z\in P}\{f_2(z)\}) \;\wedge\;$\\
            $(x \text{ is a nearest neighbour of } y \text{ in } P)
             \Rightarrow x\nearestdistlt{P} y$.
\end{enumerate}
\end{lemma}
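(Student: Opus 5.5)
Both parts reduce to comparing the sorted distance vectors $(\inearestdist{1}{P}{x},\inearestdist{2}{P}{x},\dots)$ and $(\inearestdist{1}{P}{y},\inearestdist{2}{P}{y},\dots)$ lexicographically. By Eq.~\eqref{eq:spea2-nearestdistlt}, it suffices to find the first index where they differ and show that $x$ is strictly smaller there.

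\textbf{Part (i).} This is immediate. By assumption some $z\neq x$ has $f(z)=f(x)$, so $\inearestdist{1}{P}{x}=0$. Since $f(y)$ is unique in $P$, every other point has positive distance to $y$, so $\inearestdist{1}{P}{y}>0$. Taking $\ell=1$ in Eq.~\eqref{eq:spea2-nearestdistlt} gives $x\nearestdistlt{P} y$.

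\textbf{Part (ii).} Fitness vectors are pairwise distinct and mutually non-dominated, so $P$ sorted by $f_1$ is strictly increasing in $f_1$ and strictly decreasing in $f_2$. Write $p_1,\dots,p_m$ with $m=|P|\ge 3$ for this order. By symmetry assume $f_1(y)$ is maximal, so $y=p_m$. The key geometric fact is that for points on such a staircase, $i<j<k$ implies $d(p_i,p_k)>d(p_j,p_k)$ and $d(p_i,p_k)>d(p_i,p_j)$, because both coordinate differences grow monotonically. Consequently the nearest neighbour of $y=p_m$ is $p_{m-1}$, so $x=p_{m-1}$, and the distances from $y$ satisfy $d(y,p_{m-1})<d(y,p_{m-2})<\dots<d(y,p_1)$.

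Let $a:=d(x,y)$. Then $\inearestdist{1}{P}{y}=a$. Also $\inearestdist{1}{P}{x}\le a$; if it is strictly smaller we are done with $\ell=1$. Otherwise $\inearestdist{1}{P}{x}=a$. We then compare at $\ell=2$: $\inearestdist{2}{P}{y}=d(y,p_{m-2})$, while $\inearestdist{2}{P}{x}\le d(x,p_{m-2})<d(y,p_{m-2})$ by the staircase fact with $i=m-2$, $j=m-1$, $k=m$. This step needs $m\ge 3$.

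We must double-check the tie case, where the nearest neighbour of $x$ is $y$ at distance $a$ but another point is also at distance $a$. In that case $\inearestdist{2}{P}{x}=a<d(y,p_{m-2})$ still holds, so $x\nearestdistlt{P} y$ follows with $\ell=2$.

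\textbf{Main obstacle.} The step that needs the most care is proving the staircase monotonicity rigorously, including the non-strictness subtleties of nearest-neighbour ties. For $i<j<k$ we have $|f_1(p_i)-f_1(p_k)|>|f_1(p_j)-f_1(p_k)|$ and $|f_2(p_i)-f_2(p_k)|>|f_2(p_j)-f_2(p_k)|$, with strictness coming from pairwise distinct, non-dominated vectors. Squaring and summing these gives the strict Euclidean inequality.
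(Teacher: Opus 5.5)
Your proof is correct. Part (i) coincides with the paper's, but for part (ii) you take a different and more elementary route.

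The paper argues locally with three points. It fixes $z$ as a nearest neighbour of $x$ in $P\setminus\{y\}$ and writes $a=d(x,y)$, $b=d(x,z)$, $c=d(y,z)$. From the quadrant constraints it shows that the triangle with vertices $f(x),f(y),f(z)$ is degenerate or obtuse at $f(x)$, so $c>\max(a,b)$ by the larger-angle theorem. It then needs a separate contradiction argument to establish $\inearestdist{2}{P}{y}=c$: any closer point would lie inside the axis-parallel rectangle spanned by $f(x)$ and $f(z)$.

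You instead use that pairwise distinct, mutually non-dominated vectors in $\R^2$ form a strict staircase $p_1,\dots,p_m$ along which both coordinate differences grow. This single coordinatewise fact replaces both geometric steps. With $x=p_{m-1}$ and $z=p_{m-2}$, your inequality $d(p_{m-2},p_m)>\max\{d(p_{m-2},p_{m-1}),d(p_{m-1},p_m)\}$ is exactly the paper's $c>\max(a,b)$. Your ordering $d(y,p_{m-1})<d(y,p_{m-2})<\dots<d(y,p_1)$ gives $\inearestdist{2}{P}{y}$ directly, and it also shows that the nearest neighbour of an extreme point is unique.

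Your explicit case split on whether $\inearestdist{1}{P}{x}<a$ is also a little more careful than the paper. The paper asserts $\inearestdist{2}{P}{x}=\max(a,b)$, and this equality can fail when $b<a$: a further point on the far side of $z$ may lie at a distance strictly between $b$ and $a$ from $x$. This is harmless, because only the always-valid bound $\inearestdist{2}{P}{x}\le\max(a,b)$ is needed, and the case $b<a$ is decided at $\ell=1$ anyway.

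The paper's version offers a visual geometric picture. Yours is shorter, self-contained, and does not rely on a figure.
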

\begin{proof}
To prove (i), we argue that $(\exists z \in P \setminus\{x\} \colon f(z)=f(x))$ implies $\inearestdist{1}{P}{x}=0$, and 
    $({\forall z \in P \setminus\{y\} \colon} f(z)\neq f(y))$ implies $\inearestdist{1}{P}{y}>0$, 
thus together $x\nearestdistlt{P} y$.

For (ii), assume without loss of generality that $f_1(y)=\max_{z\in P}\{f_1(z)\}$ in the third condition 
as the case $f_2(y)=\max_{z\in P}\{f_2(z)\}$ follows symmetrically. 
Since $|P|\geq 3$, let $z$ be a nearest neighbour of $x$ in $P\setminus 
\{y\}$ ($\neq \{x\}$). Let 
    $Y:=f(y), X:=f(x)$ and $Z:=f(z)$ be the fitness vectors and 
    $a:=XY, b:=XZ$ and $c:=YZ$ be the Euclidean distances between them. 

By the second condition, all fitness vectors in $P$ are distinct, hence $X, Y, Z$ are pairwise distinct
and $a,b,c>0$. Example locations of $Y, X$, and $Z$ in the objective space are given in 
Fig.~\ref{fig:nearestdistlt-property}. 

\begin{figure}[h]\centering\small
\begin{tikzpicture}[x=1em,y=1em,scale=1]
\tikzstyle{axis}=[draw,black,thick,->];
\tikzstyle{vertex}=[draw,circle,minimum size=1em,scale=0.33,fill=white,align=center];
\tikzstyle{areadomy}=[color=orange,opacity=0.3];
\tikzstyle{areadomx}=[color=blue,opacity=0.3];

\coordinate (fy)  at (15,2);
\coordinate (pfy) at (15,0);

\coordinate (fx)  at (12,7);
\coordinate (pfx) at (12,0);
\coordinate (qfx) at (0,7);

\coordinate (ra) at (12,11.5);
\coordinate (rb) at (12,-2.5);

\coordinate (fz)  at (3,10);

\draw[axis] (0,-2)  -> (0,11);
\node[label=90:{$f_2$}] at (0,11) {};
\draw[axis] (-1,0)  -> (18,0);
\node[label=0:{$f_1$}] at (18,0) {};

\fill[areadomy] (fy)  rectangle (-1,-2);
\fill[areadomy] (fy)  rectangle (18,11);
\node[vertex,fill=black,label=-90:{$\max_{u\in P}\{f_1(u)\}$}] at (pfy) {};
\draw[dashed] (pfy) -- (fy);

\fill[areadomx] (fx)  rectangle (-1,-2);
\fill[areadomx] (fx)  rectangle (18,11);

\pic[draw,angle radius=0.5em] {right angle = pfx--fx--qfx};
\pic[draw,<->,angle radius=1.5em, angle eccentricity=1.2] {angle = fz--fx--fy};
\draw[dotted,thick] 
    let \p1 = ($(fy)-(fz)$), 
        \n1 = {veclen(\x1,\y1)}
    in (fy) + (135:\n1) arc (135:198:\n1);
\draw[red] (fz) rectangle (fx);

\draw (fy) -- node[right] {$a$} (fx);
\draw (fz) -- node[above right] {$b$} (fx);
\draw (fz) -- node[below] {$c$} (fy);


\node[vertex,label=45:{$Y$}]  at (fy) {};
\node[vertex,label=45:{$X$}]  at (fx) {};
\node[vertex,label=120:{$Z$}] at (fz) {};
\end{tikzpicture}
\caption{Illustration of fitness vectors $f(y)=Y, f(x)=X$ and $f(z)=Z$.} 
\label{fig:nearestdistlt-property}
\end{figure}

Because $P$ only contains non-dominated solutions
and because of the extremal property of $y$,
$X$ cannot lie in the (orange shaded) region dominated by $Y$
nor weakly to the right of $Y$. Similarly, $Z$ 
can neither lie in any dominated (shaded) region
nor weakly to the right of $X$. 
These geometric constraints imply that the angle $\measuredangle YXZ$ 
is either flat (equal to $\pi$), or it must overly cover either the 
set of all fitness vectors dominated by $X=f(x)$ or the set of all fitness 
vectors that dominate $X$ (the two blue shaded regions on the opposite sides 
of $X$). 
Note that each of these sets is a cone with apex $X$ and opening angle 
$\pi/2$ (a right angle).
Thus, we have that either $X, Y, Z$ are on the 
same straight line,  or the triangle $\triangle XYZ$ is obtuse at $X$. 
In the former case, we get $c=a+b>\max(a,b)$, and in the latter case, 
we also have $c>\max(a,b)$ by the Triangle Larger Angle Theorem since 
edge $YZ$ of length $c$ is opposite to the largest angle 
of $\triangle XYZ$. 

We now compare nearest-neighbour distances. Since $y$'s nearest neighbour is~$x$ and $x$'s nearest neighbour is either $y$ or $z$, $\inearestdist{1}{P}{y}=a \geq \min(a,b) = \inearestdist{1}{P}{x}$. For the second nearest-neighbour distances, $\inearestdist{2}{P}{x} = \max(a, b)$ and we will show that $\inearestdist{2}{P}{y}=c$. As shown above, $c > \max(a, b)$, hence $\inearestdist{2}{P}{y} > \inearestdist{2}{P}{x}$ and  $x\nearestdistlt{P} y$.

It remains to prove $\inearestdist{2}{P}{y}=c$. Suppose, for contradiction, that there exists $z'\in P$ with $Z':=f(z')$ such that $YZ'<c$. Then $Z'$ must satisfy the same geometric constraints as $Z$, and in particular lie outside the forbidden regions,  
while also lying strictly inside the circle centred at $Y$ with radius $c$. Since neither $z\succ z'$ nor $z'\succ z$ holds, $Z'$ must lie strictly inside the axis-aligned rectangle with opposite corners $X$ and $Z$ (\eg the red rectangle in the figure). However, any such point would be closer to $X$ than $Z$, contradicting the choice of $z$ as a nearest neighbour of $x$ in $P\setminus\{y\}$. Hence, no such $z'$ exists, and $\inearestdist{2}{P}{y}=c$.
\end{proof}

The first statement of the lemma can be shown for an arbitrary number 
of objectives, and in fact is used in the literature to prove positive results 
for \algspeaii (\eg see~\cite{SPEA22024}). The second statement is novel and 
non-trivial, as its proof relies on Euclidean geometry on the plane. We now 
show that \algspeaiisim with duplicate elimination is $0/1$-monotone on \OTZT 
for a minimal archive size of~$3$.

\begin{lemma}\label{cor:spea2sim-is-01-monotone}
\algspeaiisim 
    with duplicate elimination
    and $\mu\geq 3$ 
is $0/1$-monotone on \OTZTfull.
\end{lemma}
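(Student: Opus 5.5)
The plan is to fix a generation $t$ and let $U:=A_t\cup P_t$. Since $A_t\subseteq U$, it suffices to show that $A_{t+1}$ contains a search point with $\ones{\cdot}$ equal to $\max_{z\in U}\ones{z}$, and one with $\zeroes{\cdot}$ equal to $\max_{z\in U}\zeroes{z}$. These values are at least the corresponding maxima over $A_t$. On \OTZT, every $x\notin\{0^n,1^n\}$ has fitness $(\ones{x},\zeroes{x})$ on the line $f_1+f_2=n$. Two such points are therefore never comparable: either they have equal fitness or they are incomparable. Moreover, $1^n$ and $0^n$ dominate all other points. Two facts drive the argument: a point maximising $\ones{\cdot}$ maximises $f_1$ among the non-extreme points, and the survival operators only ever remove elements, so a maximiser stays a maximiser of the shrinking set. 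I split into cases according to whether $U\cap\{0^n,1^n\}$ is empty.

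\emph{Case 1: $U\cap\{0^n,1^n\}=\emptyset$.} Every point of $U$ is non-dominated, so $A'_t=U$ with $|A'_t|\ge\mu$, and the truncation operator is applied. I would show that at every removal step, while the current set $P$ has $|P|>\mu\ge 3$, neither extreme fitness value is lost, by exhibiting an element strictly smaller than a chosen representative $y$ of that extreme (say $f_1(y)=\max_{z\in P} f_1(z)$). If some fitness vector in $P$ is shared by two search points, then part (i) of Lemma~\ref{lem:nearestdistlt-property} gives a smaller element whenever $y$'s fitness is unique. If instead $y$'s fitness is itself shared, removing any copy still leaves a point with the same number of ones. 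If all fitness vectors are distinct and $|P|\ge 4$, part (ii) of Lemma~\ref{lem:nearestdistlt-property} applies to $y$ and its nearest neighbour $x$, giving $x\nearestdistlt{P} y$. Hence $y$ (or an equal-fitness copy) survives. The same argument applies to the $f_2$-extreme, and the two arguments do not interfere, since each only needs \emph{some} strictly smaller element to exist.

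\emph{Case 2: $1^n\in U$ (symmetrically $0^n$).} By duplicate elimination, $A'_t\subseteq\{0^n,1^n\}$, so $|A'_t|\le 2<\mu$. The completion operator is applied, and all of $A'_t$ survives; this already settles the ones-part, and also the zeros-part if $0^n\in U$. Otherwise, let $D:=U\setminus A'_t$. For every $x\in D$, the only point dominating $x$ is $1^n$ (equal fitness is not domination), so $R(x)=S(1^n)$ is the same for all of $D$. The primary criterion $\inearestdist{0}{\cdot}{x}=1/R(x)$ is then tied, and $\nearestdistleqzero{D}$ reduces to $\nearestdistleq{D}$ on a set of mutually non-dominated points. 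Removal stops at size $\mu-|A'_t|\ge 3-1=2$ (or at size $\mu-2\ge 1$ only when $A'_t=\{0^n,1^n\}$, a situation already handled). Thus every removal happens from a set of size at least $3$. Repeating the argument of Case 1 with $P=D$ protects the point of $D$ maximising $\zeroes{\cdot}$, i.e.\ maximising $f_2$.

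The main obstacle is the bookkeeping at the boundary: the condition $\mu\ge 3$ is exactly what guarantees $|P|\ge 3$ whenever a removal is made, as part (ii) of Lemma~\ref{lem:nearestdistlt-property} requires. With $|P|=2$ and distinct fitness, the two points tie completely and the extreme could be removed. A second delicate point is the mixture of duplicate and distinct fitness vectors. I would handle it by working with fitness classes: either part (i) yields a strictly smaller element, or the extreme class has at least two members and cannot be emptied in one removal.
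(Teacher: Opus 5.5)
Your proof is correct and follows essentially the same route as the paper: the same case split on $(A_t\cup P_t)\cap\{0^n,1^n\}$, Lemma~\ref{lem:nearestdistlt-property}~(i) for repeated fitness vectors and (ii) for the extreme point once all fitness vectors are distinct, and the observation that all dominated points share the raw fitness $S(1^n)$, so the refined completion operator acts as truncation and, because $\mu\ge 3$, every removal is made from a set of size at least $3$. Your per-step invariant (either the extreme fitness class has at least two members, or some element is strictly smaller than its unique member) is a slightly more careful version of the paper's argument, which first removes duplicates and then invokes (ii) only at the reduction from $\mu+1$ to $\mu$.
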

\begin{proof}
Let $f:=\OTZT$ and consider the following two cases. 

$(A_t \cup P_t) \cap \{0^n,1^n\}=\emptyset$: No Pareto optimal solutions have
been found yet. In this case, $A'_t=A_t \cup P_t$, \ie no solution in $A_t \cup
P_t$ dominates another, thus only the truncation operator is employed. Let
$f(A'_t)$ be the set of fitness vectors of the solutions in $A'_t$, then it
follows from Lemma~\ref{lem:nearestdistlt-property}~(i) that the operator will
remove solutions that share the same fitness vector in $f(A'_t)$ one by one
until for each $v\in f(A'_t)$ there is only one $x_v\in A'_t$ where $f(x_v) =
v$. If either $\mu \geq |f(A'_t)|$ or $|f(A'_t)|\leq 2$ then it is clear that 
afterwards at least one solution with the largest fitness $f_1$, hence with 
the largest number of ones, and at least one with the largest fitness $f_2$, 
hence with the largest number of zeros, of $A_t \cup P_t$ are kept in $A_{t+1}$. 
Otherwise, if $\mu <|f(A'_t)|$ and $\mu\geq 3$, 
it follows from Lemma~\ref{lem:nearestdistlt-property}~(ii) that at the moment
that $A'_t$ is about to reduce its size from $\mu+1\geq 4>3$ to $\mu$, there 
are still sufficient slots to keep a solution with the largest $f_1$ and another
solution with the largest $f_2$ of $A_t \cup P_t$ in $A'_t$ and then to $A_{t+1}$.

$(A_t \cup P_t) \cap \{0^n,1^n\}\neq\emptyset$: Some Pareto optimal solutions have
been found. If both $0^n\in (A_t\cup P_t)$ and $1^n \in (A_t\cup P_t)$ then, owing to 
duplicate elimination, the algorithm only keeps one copy for each of these 
solution from $A_t \cup P_t$ in $A_{t+1}$, and these are the solutions with 
the largest number of ones and zeros. 
Otherwise, if only one Pareto optimal solution is found in $A_t\cup P_t$, say, $1^n$ (the argument for $0^n$ is symmetric), then again by the 
diversity mechanism, only one copy of $1^n$ is kept in $A_t \cup P_t$ and so $A'_t$ 
is the set $\{1^n\}$. Hence, a solution with the largest number of ones 
survives to $A_{t+1}$ and the (refined) completion operator of \algspeaiisim 
is employed on $P:=(A_t \cup P_t) \setminus A'_t$ to reduce its size to 
$\mu - |A'_t|=\mu -1 \geq 2$ before merging it with $A'_t$ to form $A_{t+1}$. 
Note that every solution $x$ of $P$ is dominated by~$1^n$, 
and does not dominate another solution in $P$, thus each has the identical 
raw fitness $R(x)=|P|$ by Eq.~\eqref{eq:spea2-raw-fitness}. The completion 
operator on~$P$ therefore behaves similarly to the truncation operator on $A'_t$ 
of the previous case. By almost the same arguments as in that case, we conclude 
that at least a solution with the largest number of zeros also survives to $A_{t+1}$.
A minor difference is that Lemma~\ref{lem:nearestdistlt-property}~(ii) is now 
applied at the moment of $|P|=\mu - |A'_t|+1\geq 2+1 = 3$ and this precisely 
matches the condition on $P$ from the lemma. 
\end{proof}

Once we have that \algspeaiisim is $0/1$-monotone, it is straightforward to 
show that the algorithm can fully cover the Pareto front of \OTZT in $\bigO(n\log{n})$ 
expected fitness evaluations using constant population sizes. The theorem 
adapts the results of~\cite{Dang2024Illustrating,DangOS2025} but for the 
($\mu$+$\lambda$) scheme of \algspeaiisim.

\begin{theoremrep}\label{thm:spea2sim}
For every $\mu\geq 3$ and $\lambda\geq 1$, \algspeaiisim 
    with duplicate elimination and standard bit mutation
optimises $\OTZTfull$ in $\bigO(\lambda n + \mu n\log{n})$ fitness evaluations 
in expectation.
\end{theoremrep}
\begin{appendixproof}
By Lemma~\ref{cor:spea2sim-is-01-monotone}, solutions with the largest
number of ones and zeros seen so far are always kept in the archive $A_t$, owing to 
the diversity mechanism and the sufficiently large population size. 
Hence, it suffices to bound the expected time to reach either $1^n$ or $0^n$. By symmetry, the same bound applies to both targets, and the expected time to reach both strings is at most the sum of the two, which does not affect the asymptotic bound.

We choose the target $1^n$, and let $i:=\max\{|x|_1\mid x\in A_t\}$ be
the largest number ones in the current archive at time $t$. In one offspring 
production, to create an individual with more $1$s, it suffices to pick one 
individual with $i$ ones from $A_t$, then flip one of its zeros while keeping 
the rest of its bits unchanged, and this event occurs with probability 
$\frac{1}{\mu}\cdot {n-1 \choose 1} \frac{1}{n}\left(1-\frac{1}{n}\right)^{n-1}\geq\frac{n-i}{e\mu n}=:s_i$ 
for the standard bit mutation. This is repeated $\lambda$ times 
independently to create $P_t$, thus the probability of creating at least an 
individual with more than $i$ ones in $P_t$ is at least 
$1-(1-s_i)^{\lambda}\geq \frac{\lambda s_i}{\lambda s_i + 1}=:q_i$ by Lemma~10
of~\cite{Badkobeh2015}. The initial population has $\mu$ individuals evaluated
and in each generation there are $\lambda$ evaluations, so the expected number 
of fitness evaluations to create $1^n$ is at most
\begin{align*}
\mu + \lambda \sum_{i=0}^{n-1}\frac{1}{q_i}
= \mu + \lambda \sum_{i=0}^{n-1}\left(1 + \frac{1}{\lambda} \cdot \frac{e\mu n}{n-i}\right)
\leq \mu + \lambda n + \mu en H_n, 
\end{align*}
which is $\bigO(\lambda n + \mu n \log{n})$. 
This is also an asymptotic bound on the expected running time of \algspeaiisim 
to fully cover the Pareto front of the function.  
\end{appendixproof}

In \algspeaiisim, we only refine the part of \algspeaii that was largely 
ignored in the previous rigorous studies, therefore, the following already 
proven results for the original algorithm automatically hold for the variant. 
The definitions of the related functions can be found in the referenced papers~\cite{DoerrKS2026,SPEA22024}. 

\begin{theorem}[Theorems of~\cite{DoerrKS2026,SPEA22024}]\label{thm:spea2sim-from-literature}
The following expected runtime bounds hold for both (vanilla) \algspeaii and \algspeaiisim. 
The last three results assume $\lambda=\bigO(\mu)$ while the first results consider the 
general choice $\lambda\geq 1$.
\begin{enumerate}
\item $\bigO((\mu+\lambda)n + n^2\log{n})$ on \OMMfull with $\mu\geq n+1$.
\item $\bigO((\mu+\lambda)n\log(\frac{\mu}{n+1}) + n^3 + \lambda n)$ on \LOTZfull with $\mu\geq n+1$.
\item $\bigO((\mu+\lambda)n + n^{k+1})$ on \OJZJfull with $\mu \geq n-2k+3$.
\item $\bigO(\mu n \min(m\log{n}, n))$ on \mOMMfull with $\mu\geq \left(\frac{2n}{m}+1\right)^{\frac{m}{2}}$.
\item $\bigO(\mu n^2)$ on \mLOTZfull with $\mu\geq \left(\frac{2n}{m}+1\right)^{m-1}$.
\item $\bigO(\mu n^k \min(mn, 3^{\frac{m}{2}}))$ on \mOJZJfull with $\mu\geq \left(\frac{2n}{m}-2k+3\right)^{\frac{m}{2}}$.
\end{enumerate}
\end{theorem}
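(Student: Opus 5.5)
The plan is a transfer argument: the cited proofs in \cite{DoerrKS2026,SPEA22024} do not depend on the part of \algspeaii that \algspeaiisim changes. \algspeaii and \algspeaiisim differ only in the completion operator, which runs only when $|A'_t|<\mu$, i.e., when $A_t\cup P_t$ has fewer than $\mu$ non-dominated solutions. The truncation operator, parent selection, mutation and initialisation are identical in the two algorithms. So the main step is to show that, under the stated population sizes, every cited argument depends only on the non-dominated solutions and on the truncation operator. Then the same coupling of random choices gives the same bounds for \algspeaiisim.

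I would go through the cited proofs in two groups. For \OMMfull, \LOTZfull, \OJZJfull and their $m$-objective versions with the given lower bounds on $\mu$, the proofs use the following structure.
\begin{enumerate}
\item The stated $\mu$ is at least the size of the Pareto front, or of the relevant set of mutually incomparable fitness vectors, e.g.\ $(\frac{2n}{m}+1)^{m/2}$ for \mOMMfull.
\item Once a non-dominated fitness vector is present, a solution with that vector survives. For the truncation operator this follows from Lemma~\ref{lem:nearestdistlt-property}~(i), or from the corresponding arguments in the cited papers.
\item The progress analysis (fitness-level or drift) tracks only the non-dominated set $A'_t$ and bounds the time to add new non-dominated vectors.
\end{enumerate}
In these proofs dominated individuals kept by the completion operator are never needed for progress. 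The completion operator only fills slots left free by non-dominated solutions, and the free slots are the same in both algorithms. Hence every invariant used (e.g.\ ``the archive contains a solution of each non-dominated fitness vector found so far'') and every probability bound on an improving mutation holds verbatim for \algspeaiisim. For \OJZJfull, the inner part is handled exactly as for \OMMfull. The jump to the extremes is a mutation from a non-dominated extreme inner point, which is preserved by the truncation argument, giving the $n^{k}$ term.

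The subtle point, and the main obstacle, is making sure no cited proof uses a lemma about dominated individuals surviving through the $k$-th nearest neighbour density. I would check the early phases of \LOTZfull and \mLOTZfull, where the population may contain dominated individuals before the front is reached. There I would confirm that the analyses only follow the best non-dominated individual, e.g.\ one maximising leading ones among the non-dominated set. That individual is non-dominated in $A_t\cup P_t$ and survives under either operator. If some argument did use dominated individuals, I would instead use pessimism: treat every dominated individual as arbitrary, which any completion operator satisfies, and check that the bound still follows. This is plausible since all listed bounds are driven by non-dominated progress. With this check, the bounds hold for both algorithms, including the restriction $\lambda=\bigO(\mu)$, which concerns only selection and mutation counts and so carries over unchanged.
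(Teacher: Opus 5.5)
Your proposal is correct and takes essentially the same approach as the paper. \algspeaiisim changes only the completion operator, which acts solely on dominated solutions, while the cited analyses rely only on non-dominated solutions, parent selection, mutation and the unchanged truncation operator, so the bounds transfer; the paper gives only a one-sentence justification of this, so your case-by-case check is more thorough than what it provides. One phrasing caveat: a literal coupling does not keep the two runs identical once the completion operators retain different dominated individuals, so the transfer should be stated as in your fallback, namely that each cited argument remains valid when the dominated archive members are arbitrary.
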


\section{Experiments}\label{sec:experiments}


We conducted experiments on the studied algorithms \algnsga, \algnsgaiii, \algsms, \algspeaii 
and \algspeaiisim on \OTZTfull to complement the theoretical findings.
\ifreview 
The authors of~\cite{DangOS2025}
already implemented most of the mentioned algorithms and the function in Python using 
PyMOO library~\cite{Deb2020} and have made their code publicly available\footnote{The authors of~\cite{DangOS2025} shared \url{https://gitlab.com/d2cmath/emoas-pareto-sparse}.}.
Therefore, we collected their code and added our implementation\footnote{PyMOO also provides \algspeaii, however, at the time of writing this paper, the default library implementation is incorrect (\eg see~\href{https://github.com/anyoptimization/pymoo/issues/773}{issue~773}) compared to the original paper~\cite{SPEA2original}. Therefore, we use our own implementation of \algspeaii.} 
of \algspeaii and \algspeaiisim for the experiments. 
We have also made our code publicly available\footnote{At \url{https://anonymous.4open.science/r/spea2-density-E0E9}. We also experimented with an additional function of~\cite{DangOS2025}, but due to space restriction, the results cannot be shown here but available on the repository.}. 
\else 
We use the code base\footnote{At \url{https://gitlab.com/d2cmath/emoas-pareto-sparse}.} of the previous work~\cite{DangOS2025}, which already implemented most of the mentioned algorithms and the function in Python using PyMOO library~\cite{Deb2020}. We only added our implementation\footnote{PyMOO also provides \algspeaii, however, at the time of publishing this paper, the library implementation of the truncation operator is incorrect, compared to the original paper~\cite{SPEA2original}. We reported this issue (\ie \href{https://github.com/anyoptimization/pymoo/issues/773}{issue~773}) and suggested a correction.} 
of \algspeaii and \algspeaiisim for the experiments. We have also made our code publicly available\footnote{At \url{https://gitlab.com/d2cmath/spea2-plus}. 
We also experimented with another function of~\cite{DangOS2025}, in which both \algspeaii and \algspeaiisim outperform the other algorithms. 
\ifarxiv
Those results are omitted here but available on the link.
\else
Due to space restriction, those results are omitted here but available on the link.
\fi}. 
\fi

Tab.~\ref{tab:config-experiments} summarises the configuration of the
experiments. We considered \OTZT on bit strings of length
$n=80$ with varying parent population size $\mu$, and recorded the success
rate of covering the Pareto front within $n^3=5.12\times 10^5$ fitness
evaluations over $100$ replicated runs. Duplication elimination is enabled in 
all runs. 
The offspring population size of \algspeaii and \algspeaiisim is set to
$\lambda=\mu$, and this makes them comparable to \nsga and \nsgaiii. 
With this setting, it turns out that 
\algspeaii requires a quite large $\mu$ to achieve a success rate of $100\%$,
thus we tested $\mu\in\{5,10,\ldots,180\}$. 
We considered uniform crossover with parameter $p_c\in\{0.0,0.5\}$, 
\ie to turn off and on the use of this operator. 

\begin{table}[ht]\centering\footnotesize
\begin{tabular}{ll}
\toprule
\textbf{Configuration} & \textbf{Setting}\\
\midrule
Algorithms & \algnsga, \algnsgaiii, \algsms, \algspeaii, \algspeaiisim\\
Problems & \OTZT with $n=80$ \\
Parent pop. size & $\mu\in\{5,10,\ldots,180\}$ \\
Offspring pop. size & $\lambda=\mu$ for all algorithms except $\lambda=1$ for \algsms\\
Parent selection & Uniform with replacement\\
Mutation & Standard bit mutation\\
Crossover & Uniform crossover with $p_c\in\{0.0, 0.5\}$\\
Diversity mechanism & Duplicate elimination (removal of genotype copies)\\
Max. fitness eval. & $5.12\times 10^{5}$ ($=n^3$)\\
No. of replications & $100$\\
\bottomrule
\end{tabular}
\caption{Configuration of the experiments.}
\label{tab:config-experiments}
\end{table}

Fig.~\ref{fig:experiments-otzt} summarises our results for \OTZTfull. As our
theory predicts, \algnsga, \algnsgaiii, \nsgaiii and \algspeaiisim have $100\%$
success rate to fully cover the Pareto front for all population sizes tested
in both settings of enabling and disabling crossover  
(except a tiny dip for \algsms when crossover is enabled for $\mu=170$). 
In contrast, the original \algspeaii struggles to optimise \OTZT when 
the population size is too small, \eg the success rate for the budget of 
$5.12\times 10^{5}$ fitness evaluations is below $10\%$ for $\mu\leq 50$. 
In order to reach the success rate of $100\%$ for $n=80$, \algspeaii requires 
a population size of $\mu\geq 125$.

\begin{figure}[ht]
\centering\hspace*{-10pt}
\begin{subfigure}[b]{0.495\textwidth}\small
    \centering
    \includegraphics[width=1.02\textwidth]{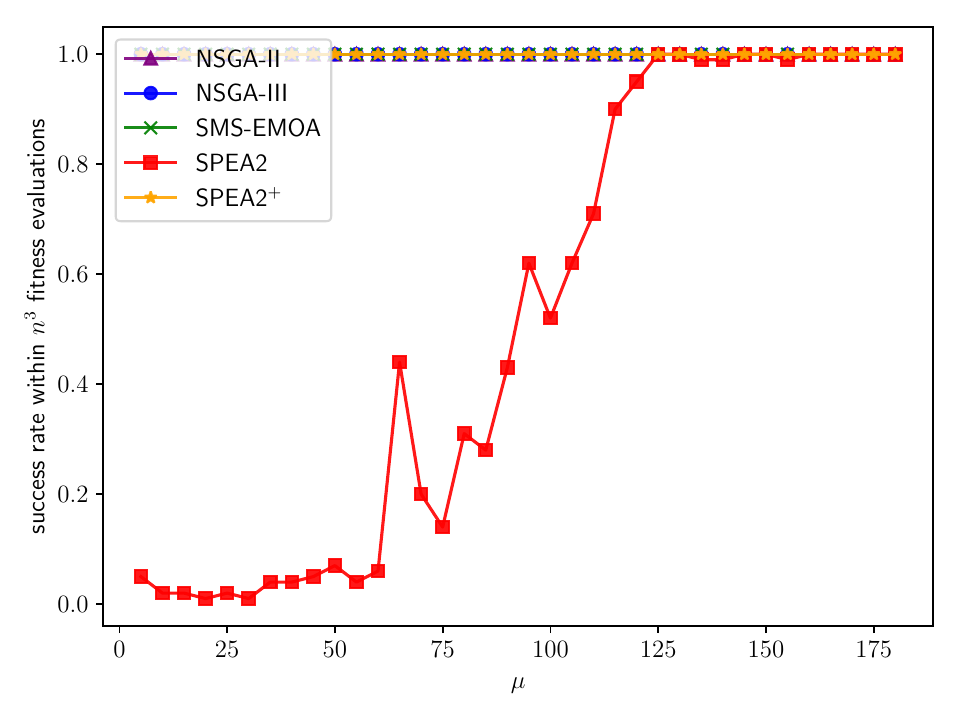}
    \caption{$p_c=0$}
    \label{fig:otzt-pc00}
\end{subfigure}
\hfill\hspace*{-5pt}
\begin{subfigure}[b]{0.495\textwidth}\small
    \centering
    \includegraphics[width=1.02\textwidth]{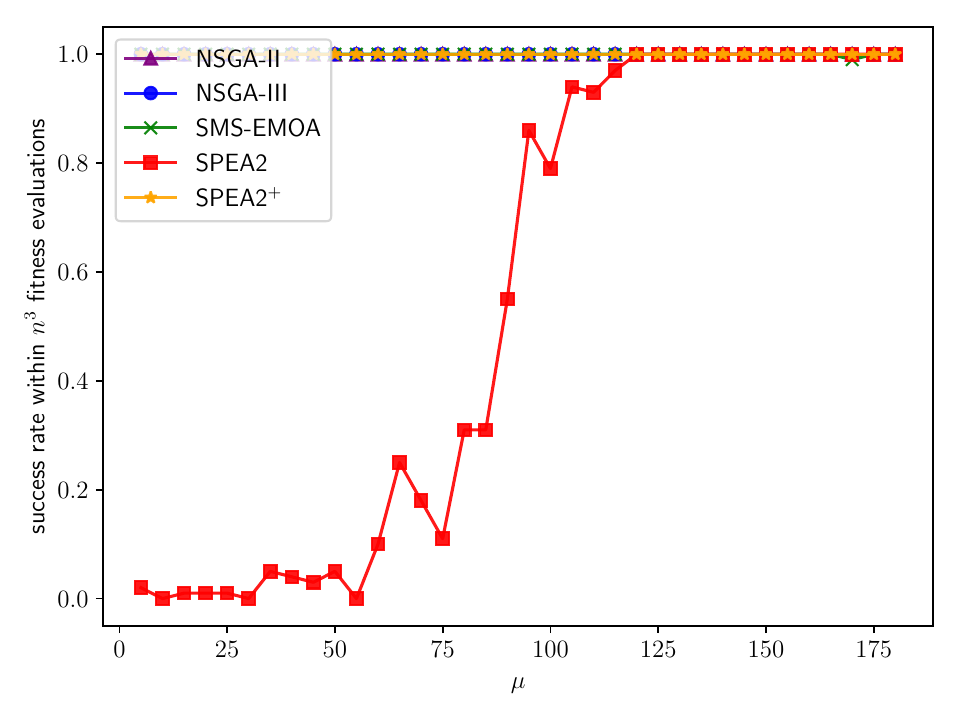}
    \caption{$p_c=0.5$}
    \label{fig:otzt-pc05}
\end{subfigure}
\caption{\small Success rate over $100$ runs to cover the Pareto front on \OTZTfull with $n=80$ with a computational budget of $n^3=5.12\times 10^{5}$ fitness evaluations and varying $\mu\in\{5,10,\ldots,180\}$. The results with uniform crossover disabled are on the left, and those with crossover enabled at rate $p_c=0.5$ are on the right.}
\label{fig:experiments-otzt}
\end{figure}

\section{Conclusions}\label{sec:concl}

We discovered a weakness of \algspeaii arising from its fitness assignment based on the $k$-nearest neighbour density estimate. 
It leads to inefficiency in covering the Pareto front of \OTZTfull
when using a constant population size, even with duplicate elimination.
This stands in contrast to the efficiency of other prominent algorithms, such as
like \algnsga, \algnsgaiii, and \algsms~\cite{Dang2024Illustrating,DangOS2025}, 
under the same setting. 
\algspeaii is inefficient on \OTZTfull as the $k$\nobreakdash-th nearest-neighbour distance fails to provide a sufficient signal for maintaining diversity within the dominated regions of the objective space.
To address this issue, we proposed an improved variant 
called \algspeaiisim that considers all pairwise distances, along with the Pareto strength. 
\algspeaiisim is efficient on \OTZTfull while preserving the established performance guarantees of the original \algspeaii on simpler problems.

Our experiments confirm a large performance gap between 
the original \algspeaii and the other algorithms, including \algspeaiisim, 
for constant parent population sizes.  
%
Future work should investigate similar points further, in particular to deepen
our understanding of the role of the Pareto-strength component in~\algspeaii.

\bibliographystyle{abbrvnat}
\bibliography{references} 


\end{document}